\algrenewcommand\algorithmicrequire{\textbf{Input:}}
\algrenewcommand\algorithmicensure{\textbf{Output:}}
\def\blfootnote{\gdef\@thefnmark{}\@footnotetext}  
\def\eqref#1{equation~\ref{#1}}
\def\1{\bm{1}}
\def\va{{\bm{a}}}
\def\vh{{\bm{h}}}
\def\vr{{\bm{r}}}
\def\vt{{\bm{t}}}
\def\vv{{\bm{v}}}
\def\vw{{\bm{w}}}
\def\mA{{\bm{A}}}
\def\mH{{\bm{H}}}
\def\mM{{\bm{M}}}
\def\mR{{\bm{R}}}
\def\mT{{\bm{T}}}
\def\mV{{\bm{V}}}
\DeclareMathAlphabet{\mathsfit}{\encodingdefault}{\sfdefault}{m}{sl}
\SetMathAlphabet{\mathsfit}{bold}{\encodingdefault}{\sfdefault}{bx}{n}
\newcommand{\tens}[1]{\bm{\mathsfit{#1}}}
\def\tA{{\tens{A}}}
\def\tG{{\tens{G}}}
\def\tW{{\tens{W}}}
\def\gD{{\mathcal{D}}}
\def\gE{{\mathcal{E}}}
\def\gL{{\mathcal{L}}}
\def\gR{{\mathcal{R}}}
\def\gS{{\mathcal{S}}}
\def\sC{{\mathbb{C}}}
\def\sR{{\mathbb{R}}}
\def\sZ{{\mathbb{Z}}}
\newcommand{\diag}{{\text{diag}}} 
\theoremstyle{plain}  
\newtheorem{thm}{Theorem}  
\theoremstyle{definition}  
\newtheorem{defn}{Definition}
\theoremstyle{remark}  
\newcommand{\round}[1]{{\lfloor{#1}\rceil}}
\begin{document}

\title{Multi-Partition Embedding Interaction with Block Term Format for Knowledge Graph Completion} 

\author{Hung-Nghiep Tran\institute{The Graduate University for Advanced Studies, SOKENDAI, Japan.} \and Atsuhiro Takasu$ ^1 $$ ^, $\institute{National Institute of Informatics, Japan. \{nghiepth, takasu\}@nii.ac.jp}  
}

\maketitle
\bibliographystyle{ecai}

\begin{abstract} 
	Knowledge graph completion is an important task that aims to predict the missing relational link between entities. Knowledge graph embedding methods perform this task by representing entities and relations as embedding vectors and modeling their interactions to compute the matching score of each triple. Previous work has usually treated each embedding as a whole and has modeled the interactions between these whole embeddings, potentially making the model excessively expensive or requiring specially designed interaction mechanisms. In this work, we propose the multi-partition embedding interaction (MEI) model with block term format to systematically address this problem. MEI divides each embedding into a multi-partition vector to efficiently restrict the interactions. Each local interaction is modeled with the Tucker tensor format and the full interaction is modeled with the block term tensor format, enabling MEI to control the trade-off between expressiveness and computational cost, learn the interaction mechanisms from data automatically, and achieve state-of-the-art performance on the link prediction task. In addition, we theoretically study the parameter efficiency problem and derive a simple empirically verified criterion for optimal parameter trade-off. We also apply the framework of MEI to provide a new generalized explanation for several specially designed interaction mechanisms in previous models. The source code is released at \url{https://github.com/tranhungnghiep/MEI-KGE}. \blfootnote{\scriptsize{In Proceedings of the European Conference on Artificial Intelligence (ECAI), 2020.}}
\end{abstract}

\section{Introduction} \label{sect:intro} 
Knowledge graphs are a popular data format for representing knowledge about entities and their relationships as a collection of triples, with each triple $ (h, t, r) $ denoting the fact that relation $ r $ exists between head entity $ h $ and tail entity $ t $. Large real-world knowledge graphs, such as Freebase \cite{bollacker_freebasecollaborativelycreated_2008} and Wikidata \cite{vrandecic_wikidatafreecollaborative_2014} have found important applications in many artificial intelligence tasks, such as question answering, semantic search, and recommender systems, but they are usually incomplete. Knowledge graph completion, or link prediction, is a task that aims to predict new triples based on existing triples. Knowledge graph embedding methods perform this task by representing entities and relations as embeddings and modeling their interactions to compute a score that predicts the existence of each triple. These models also provide the embeddings as a useful representation of the whole knowledge graph that may enable new applications of knowledge graphs in artificial intelligence tasks \cite{tran_exploringscholarlydata_2019}.

In a knowledge graph embedding model, the matching score is computed based on the \textit{interaction} between the entries of embeddings. The \textit{interaction mechanism} is the function that computes the score from the embedding entries. The \textit{interaction pattern} specifies which entries interact with each other and how; thus, it can define the interaction mechanism in a simple manner. For example, in DistMult \cite{yang_embeddingentitiesrelations_2015}, the interaction pattern is the diagonal matching matrix between head and tail embedding vectors, as detailed in Section \ref{sect:relatedwork}.

Most previous works treat embedding as a whole and model the interaction between the whole embeddings. For example, the bilinear model RESCAL \cite{nickel_threewaymodelcollective_2011} and the recent model TuckER \cite{balazevic_tuckertensorfactorization_2019} can model very general interactions between every entry of the embeddings, but they cannot scale to large embedding size. One popular approach to this problem is to design special interaction mechanisms to restrict the interactions between only a few entries, for example, DistMult \cite{yang_embeddingentitiesrelations_2015} and recent state-of-the-art models HolE \cite{nickel_holographicembeddingsknowledge_2016}, ComplEx \cite{trouillon_complexembeddingssimple_2016}, and SimplE \cite{kazemi_simpleembeddinglink_2018,lacroix_canonicaltensordecomposition_2018}. However, these interaction mechanisms are specifically designed and fixed, which may pose questions about optimality or extensibility on a specific knowledge graph. 

In this work, we approach the problem from a different angle. We explicitly model the internal structure of the embedding by dividing it into multiple partitions, enabling us to restrict the interactions in a triple to only entries in the corresponding embedding partitions of head, tail, and relation. The local interaction in each partition is modeled with the classic Tucker format \cite{tucker_mathematicalnotesthreemode_1966} to learn the most general linear interaction mechanisms, and the score of the full model is the sum score of all local interactions, which can be viewed as the block term format \cite{delathauwer_decompositionshigherordertensor_2008a} in tensor calculus. The result is a multi-partition embedding interaction (MEI) model with block term format that provides a systematic framework to control the trade-off between expressiveness and computational cost through the partition size, to learn the interaction mechanisms from data automatically through the local Tucker core tensors, and to achieve state-of-the-art performance on the link prediction task using popular benchmarks.


In general, our contributions include the following.
\begin{itemize}
	\item We introduce a new approach to knowledge graph embedding, the multi-partition embedding interaction, which models the internal structure of the embeddings and systematically controls the trade-off between expressiveness and computational cost.
	
	\item In this approach, we propose the standard multi-partition embedding interaction (MEI) model with block term format, which learns the interaction mechanism from data automatically through the Tucker core tensors.
	
	\item We theoretically analyze the framework of MEI and apply it to provide intuitive explanations for the specially designed interaction mechanisms in several previous models. In addition, we are the first to formally study the parameter efficiency problem and derive a simple optimal trade-off criterion for MEI.
	
	\item We empirically show that MEI is efficient and can achieve state-of-the-art results on link prediction using popular benchmarks.
\end{itemize}

\section{Related Work} \label{sect:relatedwork} 
In this section, we introduce the notations and review the related knowledge graph embedding models.

\subsection{Background} \label{sect:background} 
In general, we denote scalars by normal lower case such as $ a $, vectors by bold lower case such as $ \va $, matrices by bold upper case serif such as $ \mA $, and tensors by bold upper case sans serif such as $ \tA $. 

A knowledge graph is a collection of triples $ \gD $, with each triple denoted as a tuple $ (h, t, r) $, such as \textit{(UserA, Movie1, Like)}, where $ h $ and $ t $ are head and tail entities in the entity set $ \gE $ and $ r $ belongs to the relation set $ \gR $. A knowledge graph can be modeled as a labeled-directed multigraph, where the nodes are entities and each edge corresponds to a triple, with the relation being the edge label. A knowledge graph can also be represented by a third-order binary \textit{data tensor} $ \tG \in \{0, 1\}^{|\gE| \times |\gE| \times |\gR|} $, where each entry $ g_{htr} = 1 \Leftrightarrow (h, t, r) \text{ exists in } \gD $.

Knowledge graph embedding models usually take a triple $ (h, t, r) $ as input and then represent it as embeddings and model their interactions to compute a matching score $ \gS(h, t, r) $ that predicts the existence of that triple. 

\subsection{Knowledge Graph Embedding Methods} 
Knowledge graph embedding is an active research topic with many different methods. Based on the interaction mechanisms, they can be roughly divided into three main categories: (1) \textit{semantic matching models} are based on similarity measures between the head and tail embedding vectors, (2) \textit{neural-network-based models} are based on neural networks as universal approximators to compute the matching score, and (3) \textit{translation-based models} are based on the geometric view of relation embeddings as translation vectors \cite{tran_analyzingknowledgegraph_2019,wang_knowledgegraphembedding_2017}.

\paragraph{Semantic Matching Models}
RESCAL \cite{nickel_threewaymodelcollective_2011} is a general model that uses a bilinear map to model the interactions between the whole head and tail entity embedding vectors, with the relation embedding being used as the matching matrix, such that 
\begin{equation} \label{eq:rescal}
\begin{split}
\gS(h,t,r) =\ &\vh^\top \mM_r \vt,
\end{split}
\end{equation}
where $ \vh, \vt \in \sR^D $ are the embedding vectors of $ h $ and $ t $, respectively, and $ \mM_r \in \sR^{D \times D} $ is the relation embedding matrix of $ r $, with $ D $ being the embedding size. However, the matrix $ \mM_r $ grows quadratically with embedding size, making the model expensive and prone to overfitting. TuckER \cite{balazevic_tuckertensorfactorization_2019} is a recent model extending RESCAL by using the Tucker format \cite{tucker_mathematicalnotesthreemode_1966}. However, it also models the interactions between the whole head, tail, and relation embedding vectors, making the core tensor in the Tucker format grow cubically with the embedding size, and also quickly becomes expensive.

One approach to reducing computational cost is to design special interaction mechanisms that restrict the interactions between a few entries of the embeddings. For example, DistMult \cite{yang_embeddingentitiesrelations_2015} is a simplification of RESCAL in which the relation embedding is a diagonal matrix, equivalently a vector $ \vr \in \sR^D $, such that $ \mM_r = \diag(\vr) $. Its score function can also be written as a trilinear product
\begin{equation} \label{eq:trilinear}
\begin{split}
\gS(h,t,r) =\ &\langle \vh, \vt, \vr \rangle =\ \textstyle \sum_i h_i t_i r_i,
\end{split}
\end{equation}
which is an extension of the dot product to three vectors. 

DistMult is fast but restrictive and can only model symmetric relations. Most recent models focus on designing interaction mechanisms that aim to be richer than DistMult while achieving a low computational cost. For example, HolE \cite{nickel_holographicembeddingsknowledge_2016} uses a circular correlation between the head and tail embedding vectors; ComplEx \cite{trouillon_complexembeddingssimple_2016} uses complex-valued embedding vectors, $ \vh, \vt, \vr \in \sC^D $, and a special complex-valued vector trilinear product; and SimplE \cite{kazemi_simpleembeddinglink_2018,lacroix_canonicaltensordecomposition_2018} represents each entity as two role-based embedding vectors and augments an inverse relation embedding vector. In our previous work \cite{tran_analyzingknowledgegraph_2019}, we analyzed knowledge graph embedding methods from the perspective of a weighted sum of trilinear products to propose a more advanced Quaternion-based interaction mechanism and showed its promising results, which were later confirmed in a concurrent work \cite{zhang_quaternionknowledgegraph_2019}. However, these interaction mechanisms are specially designed and fixed, potentially causing them to be suboptimal or difficult to extend.

In this work, we propose a multi-partition embedding interaction framework to automatically learn the interaction mechanism and systematically control the trade-off between expressiveness and computational cost.

Semantic matching models are related to tensor decomposition methods where the embedding model can employ a standard tensor representation format in tensor calculus to represent the data tensor, such as the CP tensor rank format \cite{hitchcock_expressiontensorpolyadic_1927}, Tucker format \cite{tucker_mathematicalnotesthreemode_1966}, and block term format \cite{delathauwer_decompositionshigherordertensor_2008a}. However, when applied to knowledge graph embedding, there are some differences, such as changing from continuous tensor to binary tensor, relaxation of constraints for data analysis, and different solvers \cite{kolda_tensordecompositionsapplications_2009}. We analyze the connections to the related tensor decomposition methods in Section \ref{sect:theory}.

\paragraph{Neural-Network-based Models}
These models aim to learn a neural network, to automatically model the interaction. Recent models using convolutional neural networks such as ConvE \cite{dettmers_convolutional2dknowledge_2018} can achieve good results by sharing the convolution weights. However, they are restricted by the input format to the neural network \cite{dettmers_convolutional2dknowledge_2018}, and the operations are generally less expressive than direct interactions between the entries of the embedding vectors \cite{nickel_holographicembeddingsknowledge_2016}. We will empirically compare with them.

\paragraph{Translation-based Models}
The main advantages of these models are their simple and intuitive mechanism with the relation embeddings as the translation vectors \cite{bordes_translatingembeddingsmodeling_2013}. However, it has been shown that they have limitations in expressiveness \cite{kazemi_simpleembeddinglink_2018}. The recent model TorusE \cite{ebisu_toruseknowledgegraph_2018} improves the translation-based models by embedding in the compact torus space instead of real-valued vector space and achieves good results. We will also empirically compare with them.

\begin{figure*}[t]
	\centering
	\includegraphics[width=.9\textwidth]{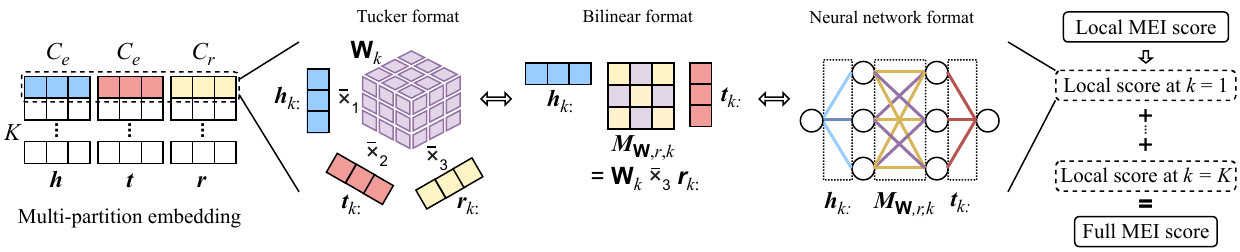}
	\caption{MEI architecture: multi-partition embedding vectors that interact only between the corresponding partitions. This figure illustrates a MEI model with block term format in three different views for the local-partition interaction: Tucker format, parameterized bilinear format, and neural network format.}
	\label{fig:mei_architecture}
\end{figure*}

\section{Multi-Partition Embedding Interaction with Block Term Format} \label{sect:model} 
In this section, we motivate, formulate, and analyze the MEI model, illustrated in Fig. \ref{fig:mei_architecture}. We construct MEI with two main concepts: 
\begin{enumerate}
	\item \textit{Multi-Partition Embedding Interaction:} Each embedding vector $ \vv \in \sR^D $ is divided into $ K $ partitions, and the interactions in each triple are restricted to only entries in the corresponding partitions $ \vv_{k:} $. For simplicity, we assume all partitions have the same size $ C $, then $ \vv $ can be denoted conveniently as a matrix $ \mV \in \sR^{K \times C} $, where $ D = KC $, each row vector $ \vv_{k:} $ is called a partition, and each column vector $ \vv_{:c} $ is called a component. 

	\item \textit{Modeling the Interaction with Block Term Format:} The local interaction is modeled with the Tucker format \cite{tucker_mathematicalnotesthreemode_1966}, which is the most general linear model that computes the weighted sum of all entry product combinations in the interacting partitions. The block term format \cite{delathauwer_decompositionshigherordertensor_2008a} emerges from the sum score of all local interactions.
\end{enumerate}

Note that the concept of multi-partition embedding interaction is highly general and intuitive, as discussed in Section \ref{sect:fullmeitheory}. In this paper, we specifically adopt the Tucker and block term tensor formats to realize a simple yet general standard MEI model.

\subsection{The Model}
In each triple $ (h, t, r) $, the entities and relations embedding vectors $ \vh, \vt \in \sR^{D_e} $, and $ \vr \in \sR^{D_r} $ are divided into multiple partitions conveniently denoted as the multi-partition embedding matrices $ \mH, \mT \in \sR^{K \times C_e} $, and $ \mR \in \sR^{K \times C_r} $, respectively. Note that the embedding sizes of entity and relation are not necessarily the same. 

Formally, the score function of MEI is defined as the sum score of $ K $ local interactions, with each local interaction being modeled by the Tucker format, 
\begin{align}
\gS (h,t,r;\bm{\theta}) =\ &\sum_{k = 1}^{K} \left( \tW_k \bar{\times}_1 \vh_{k:} \bar{\times}_2 \vt_{k:} \bar{\times}_3 \vr_{k:} \right), \label{eq:scoremeitensorproduct}
\end{align}
where $ \bm{\theta} $ denotes all parameters in the model; $ \tW_k \in \sR^{C_e \times C_e \times C_r} $ is the global core tensor at partition $ k $; $ \vh_{k:} $, $ \vt_{k:} $, and $ \vr_{k:} $ are the corresponding partitions $ k $ \footnote{Here and below, partitions are column vectors, transpose notation is omitted for simplicity. Illustration as row is just for easy visualization.}; and $ \bar{\times}_n $ denotes the \textit{$ n $-mode tensor product with a vector} \cite{kolda_tensordecompositionsapplications_2009}, which contracts the modes of the resulting tensor to make the final result a scalar. The tensor product can be expanded as the following weighted sum
\begin{align}
\gS (h,t,r;\bm{\theta}) =\ &\sum_{k = 1}^{K} \left( \sum_{x = 1}^{C_e} \sum_{y = 1}^{C_e} \sum_{z = 1}^{C_r} w_{xyz, k} h_{kx} t_{ky} r_{kz} \right) \label{eq:scoremeitensorsum},
\end{align}
where $ w_{xyz, k} $ is a scalar element of the core tensor $ \tW_k $ and $ h_{kx}, t_{ky} $, and $ r_{kz} $ denote the entries in the local partitions $ k $.

\subsection{Theoretical Analysis} \label{sect:theory} 
Let us discuss the theoretical foundations of MEI, draw connections to previous models, and study the optimal parameter efficiency.

\subsubsection{Local Interaction Modeling} \label{sect:localtheory}
We first focus on analyzing the local interactions in MEI, called local MEI, which are the building blocks of the full MEI model.

\paragraph{Tucker Format and Block Term Format}
We choose to model the local interaction at each partition by the \textit{Tucker format} \cite{tucker_mathematicalnotesthreemode_1966} of third-order tensor
\begin{align}
\gS_k (h,t,r;\bm{\theta}) =\ &\tW_k \bar{\times}_1 \vh_{k:} \bar{\times}_2 \vt_{k:} \bar{\times}_3 \vr_{k:} \label{eq:scoremeitensorproductatk}
\end{align}
because the Tucker format provides the most general linear interaction mechanism between the embedding vectors, and its core tensor totally defines the interaction mechanism. With local interactions in Tucker format, the full MEI model computed by summing the scores of all local MEI models is in \textit{block term format} \cite{delathauwer_decompositionshigherordertensor_2008a}. Both Tucker format and block term format are standard representation formats in tensor calculus. When applied in knowledge graph embedding, there are some important modifications, such as the data tensor contains binary instead of continuous values, which change the data distribution assumptions, guarantees, constraints, and the solvers. In our work, we express the model as a neural network and use deep learning techniques to learn its parameters as detailed below.

Recently, the Tucker format was independently used in knowledge graph embedding for modeling the interactions on the embedding vector as a whole \cite{balazevic_tuckertensorfactorization_2019}, while we only use the Tucker format for modeling the local interactions in our model. Thus, their model corresponds to a vanilla Tucker model, which is the special case of MEI when $ K = 1 $. Note that this vanilla Tucker model suffers from the scalability problem when the embedding size increases, whereas MEI essentially solves this problem. Moreover, MEI provides a general framework to reason about knowledge graph embedding methods, as discussed in Section \ref{sect:fullmeitheory}.

\paragraph{Parameterized Bilinear Format}
To better understand how the core tensor defines the interaction mechanism in local MEI, we can view the local interaction in Eq. \ref{eq:scoremeitensorproductatk} as a \textit{parameterized bilinear model}, by rewriting the tensor products as 
\begin{align}
\gS_k (h,t,r;\bm{\theta}) =\ &\tW_k \bar{\times}_1 \vh_{k:} \bar{\times}_2 \vt_{k:} \bar{\times}_3 \vr_{k:} \nonumber\\
=\ &(\tW_k \bar{\times}_3 \vr_{k:}) \bar{\times}_1 \vh_{k:} \bar{\times}_2 \vt_{k:}\\
=\ &\vh_{k:}^\top (\tW_k \bar{\times}_3 \vr_{k:}) \vt_{k:}\\
=\ &\vh_{k:}^\top \mM_{\tW, r, k} \vt_{k:}, \label{eq:scoremeibilinearatk}
\end{align}
where $ \mM_{\tW, r, k} \in \sR^{C_e \times C_e} $ denotes the matching matrix of the bilinear model. Note that $ \mM_{\tW, r, k} $ defines the interaction patterns of the bilinear map between $ \vh_{k:} $ and $ \vt_{k:} $, but itself is defined by $ \tW_k \bar{\times}_3 \vr_{k:} $. Specifically, each element $ {m_{\tW, r, k}}_{xy} $ of the matching matrix $ \mM_{\tW, r, k} $ is a weighted sum of the entries in $ \vr_{k:} $, weighted by the mode-$ 3 $ tube vector $ \vw_{xy:, k} $ of $ \tW_k $. Therefore, the core tensor $ \tW_k $ defines the \textit{interaction patterns} or the \textit{interaction mechanisms} at partition $ k $. Compared with the standard bilinear model RESCAL, local MEI is more flexible and efficient because its matching matrices are generated from the relation embedding vectors. Moreover, the global core tensors enable information sharing between all entities and relations, which is particularly useful when the data are sparse.


\paragraph{Dynamic Neural Network Format}
For parameter learning, we express the Tucker format as a \textit{neural network} to employ standard deep learning techniques such as dropout \cite{srivastava_dropoutsimpleway_2014} and batch normalization \cite{ioffe_batchnormalizationaccelerating_2015} to reduce overfitting and improve the convergence rate. Specifically, Eq. \ref{eq:scoremeibilinearatk} can be seen as a \textit{linear neural network}, where $ \vh_{k:} $ is the input of the network, $ \mM_{\tW, r, k} $ is the weight of the hidden layer, $ \vh_{k:}^\top \mM_{\tW, r, k} $ is the output of the hidden layer, $ \vt_{k:} $ is the weight of the output neuron, and $ \gS_k $ is the output of the network. Note that the weight of the hidden layer, $ \mM_{\tW, r, k} $, can be seen as the output of another neural network, where $ \vr_{k:} $ is the input and the core tensor $ \tW_k $ is the weight. Under this format, there are four layers to apply dropout and batch normalization: $ \vr_{k:} $, $ \mM_{\tW, r, k} $, $ \vh_{k:} $, and $ \vh_{k:}^\top \mM_{\tW, r, k} $, which are tuned as hyperparameters.

\subsubsection{Multi-Partition Embedding Interaction} \label{sect:fullmeitheory}
There are several reasons why \textit{Multi-Partition Interaction} is superior and preferable to \textit{Local-Partition Interaction}. Here, we present some interpretations of the full MEI model to explain its properties.

\paragraph{Sparse Modeling}
The full MEI model can be seen as a special form of \textit{sparse parameterized bilinear models}. The matching matrix of the full MEI model is constructed by the direct sum of the matching matrices of all local MEI models, and the result is a sparse parameterized block-diagonal matrix 
\begin{equation} \label{eq:sparsebilinear}
\small
\begin{split}
\mM^{(s)}_{\tW, r}
= 
\scriptsize
\begin{bmatrix}
\mM_{\tW, r,1} & 0 & \cdots & 0 \\
0 & \mM_{\tW, r,2} & \cdots & 0 \\
\vdots & \vdots & \ddots & \vdots \\
0 & 0& \cdots & \mM_{\tW, r,K}
\end{bmatrix}.
\end{split}
\end{equation}
The score function of the full MEI model can then be written as a bilinear model
\begin{align}
\gS (h,t,r;\bm{\theta}) =\ &\vh^\top \mM^{(s)}_{\tW, r} \vt, \label{eq:scoremeibilinearsparse}
\end{align}
where $ \vh $, $ \vt $, and $ \vr $ are the original embedding vectors before dividing into $ K $ partitions. Similarly, we can view MEI in the form of a special \textit{sparse Tucker model}, where the sparse core tensor $ \tW^{(s)} $ of MEI is constructed by the direct sum of the $ K $ local core tensors $ \tW_1, \dots \tW_K $ and the score function is written as
\begin{align}
\gS (h,t,r;\bm{\theta}) =\ &\tW^{(s)} \bar{\times}_1 \vh \bar{\times}_2 \vt \bar{\times}_3 \vr. \label{eq:scoremeitensorproductsparse}
\end{align}
This view provides a concrete explanation for the interaction mechanism in the MEI model, as it can be seen as imposing a sparsity constraint on the core tensor, or equivalently the matching matrices, to make the model efficient.


\paragraph{Multiple Interactions and the Ensemble Boosting Effect}
An intuitive explanation of MEI is that it models \textit{multiple relatively independent interactions} between the head and tail entities in a knowledge graph. These interactions correspond to the separate local partitions of the embedding vectors and together define the final matching score. Technically, MEI forms an ensemble of $ K $ local interactions by summing their scores, as seen in Eq. \ref{eq:scoremeitensorproduct}, similarly to ensemble averaging. However, we argue that MEI works as an \textit{ensemble boosting} model in a similar manner to gradient boosting methods because the summing operation is done in training and all local MEI models are optimized together. This view intuitively explains the success of MEI when each local interaction is very simple, such as when the partition size is only $ 1 $ or $ 2 $. It also suggests the empirical benefit of the ensemble boosting effect in MEI with $ K > 1 $ over the vanilla Tucker.


\paragraph{Vector-of-Vectors Embedding and the Meta-Dimensional Transforming--Matching Framework}
An important insight of MEI is that the embedding can be seen as a \textit{vector of vectors}, which means a meta-vector where each meta-dimension corresponding to a local partition contains a vector entry instead of a scalar entry. Compared to scalar entry, a vector entry contains more information and allows more expressive yet simple transformation on each entry. By using this notion of vector-of-vectors embedding, we can view MEI as a \textit{transforming--matching framework}, where the model simply transforms each meta-dimension entry of head embedding then matches it with the corresponding meta-dimension entry of tail embedding. This framework can serve as a novel general design pattern of knowledge graph embedding methods, as we show in Section \ref{sect:connection} how it can explain the previous specially designed models.


\subsubsection{Connections to Previous Specially Designed Interaction Mechanisms}\label{sect:connection}

There exist a few generalizations of previous embedding models that include DistMult, ComplEx, and SimplE; such as \cite{kazemi_simpleembeddinglink_2018} explaining them using a bilinear model, \cite{balazevic_tuckertensorfactorization_2019} using a vanilla Tucker model, and \cite{tran_analyzingknowledgegraph_2019} using a weighted sum of trilinear products. However, these generalizations consider the embedding as a whole, here we present a new generalization that considers the embedding as a multi-partition vector to provide a more intuitive explanation of these models and their specially designed interaction mechanisms. 

We first construct the multi-partition embedding vector for these models. DistMult is trivial with $ C = 1 $ and $ D = K $. For ComplEx and SimplE, $ C = 2 $ and $ D = 2K $. In ComplEx, each partition $ k $ consists of the real and imaginary components of the entry $ k $ in a ComplEx embedding vector. In SimplE, each partition $ k $ consists of the two entries $ k $ in the two role-based embedding vectors. With this correspondence, these previous models can be written in the sparse bilinear model form of MEI in Eq. \ref{eq:sparsebilinear} and Eq. \ref{eq:scoremeibilinearsparse}. For DistMult, each matching block $ \mM_{\tW, r, k} $ is just a scalar entry of the relation embedding vector. More interestingly, for ComplEx, each matching block is a $ 2 \times 2 $ matrix with the \textit{rotation pattern}, parameterized by the relation embedding vector, \[ \small \textstyle \mM_{\tW, r, k} = \scriptsize \begin{bmatrix} Re(r_k) & - Im(r_k) \\ Im(r_k) & Re(r_k) \end{bmatrix}. \] For SimplE, each matching block is a $ 2 \times 2 $ matrix with the \textit{reflection pattern}, parameterized by the relation embedding vector, \[ \small \textstyle \mM_{\tW, r, k} = \scriptsize \begin{bmatrix} 0 & r_k \\ {r^{(a)}}_k & 0 \end{bmatrix}, \] where $ \vr^{(a)} $ is the augmented inverse relation embedding vector. CP \cite{hitchcock_expressiontensorpolyadic_1927} is similar to SimplE, but missing $ \vr^{(a)} $, making the matching matrix lose the geometrical interpretation, which is probably the reason why CP does not generalize well to new data, as reported in \cite{tran_analyzingknowledgegraph_2019}. 

The interaction mechanisms of these models are totally characterized by the simple and fixed patterns in their matching blocks $ \mM_{\tW, r, k} $, which also specify the interaction restriction between the entries. In MEI, the interaction restriction can be varied by setting the partition size, and more importantly, the interaction patterns can be automatically learned from data.

\subsubsection{Computational Analysis} 
\paragraph{Complexity} 
For simplicity, we consider the same embedding size $ D = KC $ for both entity and relation. The parameters in a MEI model include the embedding vectors of all entities, all relations, and the core tensors. On a knowledge graph with $ |\gE| $ entities and $ |\gR| $ relations, the number of parameters in MEI is $ O(|\gE| D + |\gR| D + K C^3) = O(|\gE| D + |\gR| D + D^3 / K^2) $. In this paper's experiments, we restrict them to the simplified case of one single shared-core tensor for all $ K $ partitions, so the number of parameters in this case is $ O(|\gE| D + |\gR| D + C^3) = O(|\gE| D + |\gR| D + D^3 / K^3) $.

We note a few interesting observations. First, the core tensor size of the vanilla Tucker (when $ K = 1 $) is much larger than the sparse core of MEI, up to $ K^2 $ times in non-shared-core MEI and $ K^3 $ times in shared-core MEI. These factors can become crucial in practice; for example, with $ D = 1000$ and $K = 10, C = 100 $, the vanilla Tucker core has 1 billion parameters, making it infeasible on most GPUs, while shared-core MEI has only 1 million parameters in the core tensor. Second, the partition size $ C $ can be set independently from the embedding size $ D $; thus, the core tensor sizes can be considered as growing linearly with $ K $ in the former case of non-shared-core MEI, and as constant in the latter case of shared-core MEI.

\paragraph{Parameter Efficiency} 
By using Tucker format for local interactions, MEI with block term format is \textit{fully expressive}. However, in practice, we usually do not care about the parameter \textit{upper bound} for fully expressiveness of the model. The more interesting property of the model is its ability to efficiently capture complex patterns in the knowledge graph. In this regard, we define the criteria to measure the expressiveness and parameter efficiency of the model. To the best of our knowledge, we are the first to formally study the parameter efficiency in knowledge graph embedding.

From the interpretation of MEI as a transforming--matching framework in Section \ref{sect:fullmeitheory}, where the model first transforms each head embedding partition then simply matches it with the corresponding tail embedding partition, we see that the ability to capture complex patterns depends totally on the transformation system.
\begin{defn} \textbf{\textit{(Expressiveness)}} \label{def:expressive}
	The expressiveness of the MEI model is measured by the degrees of freedom of the model provided by its transformation system. 
\end{defn}
For example, a linear transformation in a $ 3 $-dimensional space has 9 degrees of freedom: 3 for translation, 3 for rotation, and 3 for scaling. For a MEI model with two partitions of size $ C = 3 $, the sum score of two local interactions has $ 9 + 9 = 18 $ degrees of freedom.

As mentioned earlier, the vanilla Tucker model can become excessively expensive when the embedding size is large, in which case, it is necessary to use a MEI model with a smaller partition size. To compare fairly across models, we define the \textit{parameter efficiency}.
\begin{defn} \textbf{\textit{(Parameter efficiency)}} \label{def:paramefficient}
	The parameter efficiency of a model is measured by the ratio of its expressiveness and the number of parameters. 
\end{defn}

The size of a MEI model depends on the number of partitions and the partition size. Changing any of them affects the parameter count of the model, its expressiveness, and its parameter efficiency. The effect is rather complicated; when the partition size is small, the expressiveness and model size depend mainly on the number of entities and relations; however, when the partition size becomes large enough, the effects of the core tensor outweigh that of the embeddings. Interestingly, we show that the optimal partition size can be determined on any dataset with mild assumptions as stated in the following theorem.

\begin{thm} \textbf{\textit{(Optimal parameter efficiency)}} \label{thm:optimalefficiency}
	Given any MEI model that represents an arbitrary knowledge graph over $ |\gE| $ entities and $ |\gR| $ relations, it is optimal in terms of maximizing the parameter efficiency $ P $ if and only if the partition size $$ \textstyle C = \min(\round{\sqrt{|\gE| + |\gR|}}_{P}, D), $$ where $ \round{\cdot}_{P} $ denotes a special rounding function that selects the floor or ceiling values depending on where $ P $ evaluates to a larger value.
\end{thm}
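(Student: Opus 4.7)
The plan is to write expressiveness $E$ and parameter count $N$ as explicit functions of the design variables $K,C$, form the efficiency $P = E/N$, and maximize. By Definition~\ref{def:expressive}, the transformation system of local partition $k$ is the bilinear map $\vh_{k:}^\top \mM_{\tW,r,k} \vt_{k:}$ from Eq.~\ref{eq:scoremeibilinearatk}; this is a linear map on a $C$-dimensional space and therefore contributes $C^2$ degrees of freedom. Summing over the $K$ independent local interactions gives $E = KC^2$. From the Computational Analysis, the non-shared-core MEI has $N = (|\gE|+|\gR|)\,KC + KC^3 = K\bigl[(|\gE|+|\gR|)C + C^3\bigr]$ parameters, so
\[
P(K,C) \;=\; \frac{E}{N} \;=\; \frac{C}{(|\gE|+|\gR|) + C^2}.
\]
The factor $K$ cancels, so the problem reduces to a single-variable optimization in $C$; this is what makes the statement of the theorem independent of $K$.

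Next, set $S := |\gE|+|\gR|$ and treat $C$ as a positive real. Then $P'(C) = (S - C^2)/(S + C^2)^2$, which is positive on $(0,\sqrt{S})$, zero at $\sqrt{S}$, and negative on $(\sqrt{S},\infty)$. Hence $P$ is strictly unimodal with a unique real maximizer $C^{*} = \sqrt{|\gE|+|\gR|}$. Imposing integrality $C \in \sZ_{>0}$, strict unimodality forces the integer maximum to lie at $\lfloor C^{*}\rfloor$ or $\lceil C^{*}\rceil$; the rounding operator $\round{\cdot}_{P}$ is defined in the statement to pick whichever of these two yields the larger $P$, so optimality is equivalent to $C = \round{\sqrt{S}}_{P}$. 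Finally, the constraint $K = D/C \geq 1$ gives $C \leq D$; if $C^{*} > D$, then $P$ is strictly increasing on $[1,D]$ and the optimum is the boundary $C = D$, which is exactly what the outer $\min(\cdot,D)$ encodes. Combining both constraints yields the claimed formula, and the argument is reversible, establishing the ``if and only if''.

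The main technical subtlety is the identification $E = KC^2$, which requires justifying that the $C^2$ entries of each local matching matrix $\mM_{\tW,r,k}$ are genuinely independent degrees of freedom of the transformation system. This follows from the parameterized-bilinear view in Section~\ref{sect:localtheory}: since the mode-$3$ tubes $\vw_{xy:,k}$ of the non-shared core tensor $\tW_k$ are free parameters, the map $\vr_{k:} \mapsto \mM_{\tW,r,k}$ can realize every entry of the $C \times C$ matching matrix, so the $K$ local transformations are linearly independent and contribute additively to the total degrees of freedom. Once this is established, the remainder is routine one-variable calculus together with the discrete case-check at the two integers neighbouring $\sqrt{|\gE|+|\gR|}$, so I would not grind through those arithmetic steps in detail.
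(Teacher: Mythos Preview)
Your proof is correct and follows essentially the same route as the paper: write $P$ (or its reciprocal) as a function of $C$ alone, differentiate, find the unique critical point at $\sqrt{|\gE|+|\gR|}$, use unimodality to handle the integer rounding, and clip to $C\le D$. The only discrepancy is that the paper's proof multiplies the expressiveness by an additional factor $|\gR|$ (one matching matrix per relation), giving $E=|\gR|KC^2$ rather than your $E=KC^2$; since $|\gR|$ is a constant independent of $C$, this scales $P$ uniformly and leaves the argmax unchanged, so your argument goes through unchanged.
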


\begin{proof}
	Consider an arbitrary knowledge graph over $ |\gE| $ entities and $ |\gR| $ relations, where $ |\gE|, |\gR| \in \sZ^+ $ fixed for this knowledge graph, and an arbitrary MEI model representing the given knowledge graph with partition size $ C $, number of partitions $ K $, and embedding size $ D = KC $, where $ C, K, D \in \sZ^+ $. The total parameter count is $$ \textstyle T = |\gE| D + |\gR| D + K C^3 = |\gE| D + |\gR| D + D C^2. $$ There are $ |\gR| $ distinct matching matrices corresponding to the number of relations, each of which include $ K $ local interactions, so the total expressiveness of the model is $$ \textstyle E = |\gR| K C^2 = |\gR| D C. $$ The parameter efficiency of the model as defined in Definition \ref{def:paramefficient} is $ P = \frac{E}{T} $. For simplicity, consider its inverse, $$ \textstyle P^{-1} = \frac{T}{E} = \frac{|\gE| + |\gR|}{|\gR|C} + \frac{C}{|\gR|} $$ and assume its continuous extension by interpolation\footnote{Not to be confused with analytic continuation of analytic functions.}. Noting that $ P^{-1} $ only depends on $ C $, we can take its first derivative w.r.t. $ C $ as $$ \textstyle \frac{\mathrm{d}}{\mathrm{d} C}[P^{-1}] = - \frac{|\gE| + |\gR|}{|\gR|C^2} + \frac{1}{|\gR|}, $$ which evaluates to $ 0 $ when $ C = \sqrt{|\gE| + |\gR|} $. The second derivative of $ P^{-1} $ w.r.t. $ C $ is $$ \textstyle \frac{\mathrm{d}^2}{\mathrm{d} C^2}[P^{-1}] = 2 \frac{|\gE| + |\gR|}{|\gR|C^3}, $$ which is positive everywhere.\\
	($ \Leftarrow $) By the derivative tests, $ C = \sqrt{|\gE| + |\gR|} $ is the global maximum of the unimodal parameter efficiency function $ P $; thus, the optimal partition sizes must be its floor or ceiling values, which are selected depending on $ P $ evaluations, that is, $ C = \round{\sqrt{|\gE| + |\gR|}}_{P} $. When the embedding size $ D < \round{\sqrt{|\gE| + |\gR|}}_{P} $, we use the largest possible partition size; thus, the optimal $ C = \min(\round{\sqrt{|\gE| + |\gR|}}_{P}, D) $, as required.\\
	($ \Rightarrow $) By Fermat's theorem on stationary points, all local maxima occur at critical points. $ C = \sqrt{|\gE| + |\gR|} $ is the only feasible critical point; thus, $ C = \min(\round{\sqrt{|\gE| + |\gR|}}_{P}, D) $ must be the only possible optimal partition sizes, as required.
\end{proof}

Theorem \ref{thm:optimalefficiency} predicts that on WN18 and WN18RR with $ \approx 40,000 $ entities and relations, the optimal partition size would be $ \approx 200 $. On FB15K and FB15K-237 with $ \approx 15,000 $ entities and relations, the optimal partition size would be $ \approx 122 $. When $ C $ increases, $ P $ increases and is maximized at the optimal partition sizes and then starts decreasing. Thus, when the computational budget is high enough for a large embedding size $ D = KC $, it is more parameter efficient to keep the partition size $ C $ close to the optimal value and increase the number of partitions $ K $. These predictions are empirically verified in Section \ref{sect:result}. Note that this criterion only provides a general guideline for choosing model size, but there are other detailed factors that can affect the model performance in practice, such as data sparsity, data distribution, and the ensemble boosting effect. When the dataset is very large, sparse, and unevenly distributed, it may be preferable to restrict $ C $ and try to maximize the empirical benefit of the ensemble boosting effect with a large number $ K $ of small local MEI models.

\subsection{Learning}
The learning problem in knowledge graph embedding methods can be modeled as the binary classification of every triple as existence and nonexistence. Because the number of nonexistent triples w.r.t. a knowledge graph is usually very large, we only sample a subset of them by the negative sampling technique \cite{mikolov_efficientestimationword_2013}, which replaces the $ h $ or $ t $ entities in each existent triple $ (h, t, r) $ with other random entities to obtain the locally related nonexistent triples $ (h', t, r) $ and $ (h, t', r) $ \cite{bordes_translatingembeddingsmodeling_2013}. The set of existent triples is called the true data $ \gD $, and the set of nonexistent triples is called the negative sampled data $ \gD' $.

To construct the loss function, we define a Bernoulli distribution over each entry of the binary data tensor $ \tG $ to model the existence probability of each triple as $ \hat{p}_{htr} = g_{htr} $. The predicted probability of the model is computed by using the standard logistic function on the matching score as $ p_{htr} = \sigma(\gS(h,t,r;\bm{\theta})) $. We can then learn both the embeddings and the core tensor from data by minimizing the cross-entropy loss:
\begin{equation} \label{eq:loss_crossentropy}
\small
\begin{split}
\gL(\gD, \gD';\bm{\theta}) =\ - \sum_{(h, t, r) \in \gD \cup \gD'}\bigl( \hat{p}_{htr} &\log p_{htr}\\
+ (1 - \hat{p}_{htr}) &\log (1 - p_{htr}) \bigr),
\end{split}
\end{equation}
where $ \hat{p} = 1 $ in $ \gD $ and $ 0 $ in $ \gD' $. 

\section{Experiments} \label{sect:experiment} 

\subsection{Experimental Settings} \label{sect:expsetting} 
\paragraph{Datasets}
We use four popular benchmark datasets for link prediction, as shown in Table \ref{tab:data}. WN18 \cite{bordes_translatingembeddingsmodeling_2013} and WN18RR \cite{dettmers_convolutional2dknowledge_2018} are subsets of WordNet \cite{millergeorgea._wordnetlexicaldatabase_1995}, which contains lexical relationships between words. FB15K \cite{bordes_translatingembeddingsmodeling_2013} and FB15K-237 \cite{toutanova_observedlatentfeatures_2015} are subsets of Freebase \cite{bollacker_freebasecollaborativelycreated_2008}, which contains general facts. WN18 and FB15K are more popular, whereas WN18RR and FB15K-237 are recently built and more competitive.

\begin{table}
	
	\caption{Datasets statistics.}
	\label{tab:data}
	\centering
	\begin{adjustbox}{max width=\columnwidth}
		\begin{tabular}{@{\extracolsep{0pt}}lrrrrr}
			\toprule
			Dataset & $ |\gE| $ & $ |\gR| $ & Train & Valid & Test\\
			\hline 
			WN18 & 40,943 & 18 & 141,442 & 5,000 & 5,000\\
			FB15K & 14,951 & 1,345 & 483,142 & 50,000 & 59,071\\ 
			WN18RR & 40,943 & 11 & 86,835 & 3,034 & 3,134\\
			FB15K-237 & 14,541 & 237 & 272,115 & 17,535 & 20,466\\ 
			\bottomrule
		\end{tabular}
	\end{adjustbox}
\end{table}

\paragraph{Evaluations}
We evaluate and analyze MEI on the link prediction task \cite{bordes_translatingembeddingsmodeling_2013}. In this task, for each true triple $ (h, t, r) $ in the test set, we replace $ h $ and $ t $ by every other entity to generate corrupted triples $ (h', t, r) $ and $ (h, t', r) $, respectively. The goal of the model is to rank the true triple $ (h, t, r) $ before the corrupted triples based on the score $ \gS $. We compute popular evaluation metrics including $ MRR $ (mean reciprocal rank, which is robust to outlier rankings) and $ H@k $ for $ k \in \{1, 3, 10\} $ (Hits at $ k $, which is how many true triples are correctly ranked in the top $ k $) \cite{trouillon_complexembeddingssimple_2016}. The higher $ MRR $ and $ H@k $ are, the better the model performs. To avoid false-negative error, i.e., some corrupted triples are actually existent, we follow the protocols used in other works for filtered metrics \cite{bordes_translatingembeddingsmodeling_2013}. In this protocol, all existent triples in the training, validation, and test sets are removed from the corrupted triples set before computing the rank of the true triple.

\paragraph{Baselines}
To evaluate the prediction on the optimal parameter efficiency, we compare MEI$ _{1 \times 200} $ (vanilla Tucker model) and MEI$ _{3 \times 100} $. The aim is to show that the model with optimal parameter efficiency can achieve better results with even fewer parameters. We also evaluate MEI against several strong baselines including classic models such as TransE, RESCAL, DistMult, and recent state-of-the-art models such as ComplEx, SimplE, and ConvE. We also compare MEI with TorusE that uses larger embedding size; ComplEx at $ K = 400 $ that was retuned with N3 weight decay, reciprocal relation, and full softmax loss; and RotatE without the adversarial sampling technique as this technique is not subjected to a specific model.

\paragraph{Implementations}
We trained MEI using mini-batch stochastic gradient descent with Adam optimizer \cite{kingma_adammethodstochastic_2015}. We followed the 1-N scoring procedure in \cite{dettmers_convolutional2dknowledge_2018} for negative sampling of $ (h, t, r) $, where negative samples are reused multiple times for computation efficiency and the number of negative samples is different for each triple. The results of MEI$ _{1 \times 200} $ are reproduced from the vanilla Tucker model in \cite{balazevic_tuckertensorfactorization_2019}; note that the relation embedding size $ D_r = 30 $ on WN18 and WN18RR only. All hyperparameters of MEI$ _{3 \times 100} $ are tuned by random search \cite{bergstra_randomsearchhyperparameter_2012}, including batch size, learning rate, decay rate, batch normalization, and dropout rates, which we will publish together with the code. Note that in these experiments, we restrict them to the simplified case of one single shared-core tensor for all $ K $ partitions, as an analogy to single interaction patterns in previous specially designed models.

\subsection{Main Results} \label{sect:result} 
\begin{table*}[ht]
	
	\caption[Link prediction results on WN18 and FB15K.]{Link prediction results on WN18 and FB15K. $ ^\dagger $ are reported in \cite{nickel_holographicembeddingsknowledge_2016}, $ ^\ddagger $ are reported in \cite{trouillon_complexembeddingssimple_2016}, other results are reported in their papers. Best results are in bold, second-best results are underlined.}
	\label{tab:result}
	\centering
	\begin{adjustbox}{max width=\textwidth}
		\begin{tabular}{@{\extracolsep{2pt}}lcccccccc@{}}
			
			\toprule
			
			& \multicolumn{4}{c}{\textbf{WN18}} & \multicolumn{4}{c}{\textbf{FB15K}} \\
			\cmidrule(lr){2-5} \cmidrule(lr){6-9}
			& MRR & H@1 & H@3 & H@10 & MRR & H@1 & H@3 & H@10 \\ 
			\hline
			
			TransE \cite{bordes_translatingembeddingsmodeling_2013} $ ^\dagger $ & 0.495 & 0.113 & 0.888 & 0.943 & 0.463 & 0.297 & 0.578 & 0.749 \\
			
			ConvE \cite{dettmers_convolutional2dknowledge_2018} & 0.943 & 0.935 & 0.946 & 0.956 & 0.657 & 0.558 & 0.723 & 0.831 \\
			
			RESCAL \cite{nickel_threewaymodelcollective_2011} $ ^\dagger $ & 0.890 & 0.842 & 0.904 & 0.928 & 0.354 & 0.235 & 0.409 & 0.587 \\
			DistMult \cite{yang_embeddingentitiesrelations_2015} $ ^\ddagger $ &  0.822 & 0.728 & 0.914 & 0.936 & 0.654 & 0.546 & 0.733 & 0.824 \\
			ComplEx \cite{trouillon_complexembeddingssimple_2016} & 0.941 & 0.936 & 0.945 & 0.947 & 0.692 & 0.599 & 0.759 & 0.840 \\
			SimplE \cite{kazemi_simpleembeddinglink_2018} & 0.942 & 0.939 & 0.944 & 0.947 & 0.727 & 0.660 & 0.773 & 0.838  \\  
			
			TorusE \cite{ebisu_toruseknowledgegraph_2018} & 0.947 & 0.943 & 0.950 & 0.954 & 0.733 & 0.674 & 0.771 & 0.832 \\
			ComplEx new tuning \cite{lerer_pytorchbiggraphlargescalegraph_2019} & -- & -- & -- & -- & 0.790 & -- & -- & 0.872 \\  
			
			\hline
			
			
			
			MEI$ _{1 \times 200} $ & \textbf{0.953} & \textbf{0.949} & \textbf{0.955} & \textbf{0.958} & \underline{0.795} & \underline{0.741} & \underline{0.833} & \underline{0.892} \\  
			
			MEI$ _{3 \times 100} $ & \underline{0.950} & \underline{0.946} & \underline{0.952} & \underline{0.957} & \textbf{0.806} & \textbf{0.754} & \textbf{0.843} & \textbf{0.893} \\  
			
			
			\bottomrule
			
		\end{tabular}
	\end{adjustbox}
\end{table*}

\begin{table*}
	
	\caption[Link prediction results on WN18RR and FB15K-237.]{Link prediction results on WN18RR and FB15K-237. $ ^\dagger $ are reported in \cite{ebisu_generalizedtranslationbasedembedding_2019}, $ ^\ddagger $ are reported in \cite{dettmers_convolutional2dknowledge_2018}, other results are reported in their papers. Best results are in bold, second-best results are underlined.}
	\label{tab:result_hard}
	\centering	
	\begin{adjustbox}{max width=\textwidth}
		\begin{tabular}{@{\extracolsep{2pt}}lcccccccc@{}}
			
			\toprule
			
			& \multicolumn{4}{c}{\textbf{WN18RR}} & \multicolumn{4}{c}{\textbf{FB15K-237}} \\
			\cmidrule(lr){2-5} \cmidrule(lr){6-9}
			& MRR & H@1 & H@3 & H@10 & MRR & H@1 & H@3 & H@10 \\ 
			\hline
			
			TransE \cite{bordes_translatingembeddingsmodeling_2013} $ ^\dagger $ & 0.182 & 0.027 & 0.295 & 0.444 & 0.257 & 0.174 & 0.284 & 0.420 \\
			
			ConvE \cite{dettmers_convolutional2dknowledge_2018} & 0.43 & 0.40 & 0.44 & 0.52 & 0.325 & 0.237 & 0.356 & 0.501 \\
			
			DistMult \cite{yang_embeddingentitiesrelations_2015} $ ^\ddagger $ & 0.43 & 0.39 & 0.44 & 0.49 & 0.241 & 0.155 & 0.263 & 0.419 \\
			ComplEx \cite{trouillon_complexembeddingssimple_2016} $ ^\ddagger $ & 0.44 & 0.41 & 0.46 & 0.51 & 0.247 & 0.158 & 0.275 & 0.428 \\
			
			TorusE \cite{ebisu_generalizedtranslationbasedembedding_2019} & 0.452 & 0.422 & 0.464 & 0.512 & 0.305 & 0.217 & 0.335 & 0.484 \\
			RotatE w/o adv \cite{sun_rotateknowledgegraph_2019} & -- & -- & -- & -- & 0.297 & 0.205 & 0.328 & 0.480 \\  
			
			\hline
			
			
			
			
			MEI$ _{1 \times 200} $ & \textbf{0.470} & \textbf{0.443} & \textbf{0.482} & \textbf{0.526} & \underline{0.358} & \textbf{0.266} & \underline{0.394} & \textbf{0.544} \\  
			
			
			MEI$ _{3 \times 100} $ & \underline{0.458} & \underline{0.426} & \underline{0.470} & \underline{0.521} & \textbf{0.359} & \textbf{0.266} & \textbf{0.395} & \textbf{0.544} \\  
			
			\bottomrule
			
		\end{tabular}
	\end{adjustbox}
\end{table*}

\paragraph{Link Prediction Performance}
Tables \ref{tab:result} and \ref{tab:result_hard} show the main results. In general, MEI strongly outperforms the baselines. MEI and ConvE both aim to learn the interaction between the embedding vectors, and interestingly, the multi-partition embedding interaction used in MEI can achieve better results than the convolutional neural networks used in ConvE. MEI also outperforms the general bilinear model RESCAL and other recent state-of-the-art bilinear models DistMult, ComplEx, and SimplE, which is explained by the fact that they are special cases of MEI with specific interaction patterns, as shown in Section \ref{sect:theory}. Compared with TorusE, the results show that an expressive interaction mechanism can help a smaller model outperform a much larger model. There are some recent techniques that help to improve the performance of old models, but we show that MEI can still outperform retuned ComplEx and RotatE reported with comparable settings. Moreover, note that MEI is highly general and potentially preferable for sophisticated datasets.

\paragraph{Optimal Parameter Efficiency}
Empirical results agree very well with the predictions of Theorem \ref{thm:optimalefficiency} about the optimal parameter efficiency. On WN18 and WN18RR, MEI$ _{1 \times 200} $ consistently outperforms MEI$ _{3 \times 100} $ using fewer parameters. On FB15K and FB15K-237, the model sizes are reversed due to different numbers of entities and relations, with MEI$ _{1 \times 200} $ having two times more parameters than MEI$ _{3 \times 100} $. On FB15K, as predicted, MEI$ _{3 \times 100} $ consistently outperforms MEI$ _{1 \times 200} $. On FB15K-237, MEI$ _{3 \times 100} $ outperforms MEI$ _{1 \times 200} $ most of the time, although not by a large margin, but uses only half the number of parameters. These results are particularly interesting because they suggest that when the embedding size $ D $ is large enough, MEI with $ K > 1 $ can both scale to larger embedding sizes and have better results than MEI with $ K = 1 $ partition.

\subsection{Analyses} 
\paragraph{Parameter Scale Comparison}
Table \ref{tab:paramscale} compares the performance of MEI with that of ConvE \cite{dettmers_convolutional2dknowledge_2018}, which aims to learn interaction mechanisms by a neural network, at different parameter scales. The results show that MEI achieves better results than ConvE at the same parameter count. Moreover, the small MEI model at 0.95M parameters remarkably outperforms the other model at 1.89M parameters. These results suggest that MEI is an effective framework to utilize the parameters of the model and to learn the interaction mechanisms automatically for knowledge graph embedding.

\begin{table}
	\caption{Parameter scaling on FB15K-237.}
	\label{tab:paramscale}
	\centering	
	\begin{adjustbox}{max width=\columnwidth}
		\begin{tabular}{@{\extracolsep{0pt}}lcccccc}
			\toprule
			& Param. & Emb. & & \multicolumn{3}{c}{{H@}}   \\
			Model & count & size & MRR & 1 & 3 & 10 \\
			\hline
			ConvE & 1.89M & 96 & .32 & .23 & .35 & .49  \\
			ConvE & 0.95M & 54 & .30 & .22 & .33 & .46  \\
			\hline
			MEI & 1.89M & 3$ \times $40 & .34 & .25 & .38 & .53  \\ 
			MEI & 0.95M & 3$ \times $20 & .33 & .24 & .36 & .51  \\ 
			\bottomrule
		\end{tabular}
	\end{adjustbox}
\end{table}

\paragraph{Parameter Trade-off Analysis}
There are two kinds of parameters in the MEI model, the embeddings and the core tensors. Theorem \ref{thm:optimalefficiency} provides a guideline to trade-offs between them. For example, on FB15K-237, the parameter efficiency increases when the partition size increases up to $ C \approx 122 $. However, there are other factors affecting this trade-off, such as the ensemble boosting effect that favors larger $ K $ and smaller $ C $. We argue that due to this effect, MEI with $ K > 1 $ has an empirical advantage compared with MEI with $ K = 1 $. To evaluate this claim, we analyze the performance of MEI models with approximately the same parameter counts but different core-tensor sizes on FB15K-237. To disambiguate the effects of larger core tensor, we made sure that the models with larger core tensors would have smaller parameter counts. Table \ref{tab:trade-off} shows that the models with larger core tensor consistently achieve better results with even fewer total parameters, agreeing very well with Theorem \ref{thm:optimalefficiency}. Interestingly, MEI with $ K = 3 $ achieves competitive results compared with MEI with $ K = 1 $, which suggest that the ensemble boosting effect benefits MEI with $ K > 1 $, as we argued.

\begin{table}
	\caption{Parameter trade-off analysis on FB15K-237.}
	\label{tab:trade-off}
	\centering	
	\begin{adjustbox}{max width=\columnwidth}
		\begin{tabular}{@{\extracolsep{0pt}}ccccccc}
			\toprule
			Emb. & Param. & $ \tW $ & & \multicolumn{3}{c}{{H@}}   \\
			size & count & size & MRR & 1 & 3 & 10 \\
			\hline
			12$ \times $11 & 1.95M & 1K & 0.335 & 0.247 & 0.367 & 0.514  \\
			\,\,\,6$ \times $21 & 1.87M & 9K & 0.339 & 0.249 & 0.371 & 0.518  \\
			\,\,\,3$ \times $40 & 1.84M & 64K & \textbf{0.344} & 0.253 & \textbf{0.378} & \textbf{0.527}  \\
			\,\,\,1$ \times $82 & 1.76M & 551K & \textbf{0.344} & \textbf{0.255} & \textbf{0.378} & 0.522  \\
			\bottomrule
		\end{tabular}
	\end{adjustbox}
\end{table}

\section{Conclusion and Future Work} \label{sect:conclusion} 
In this work, we proposed MEI, the multi-partition embedding interaction model with block term format, to systematically control the trade-off between expressiveness and computational cost, to learn the interaction mechanisms from data automatically, and to achieve state-of-the-art performance on the link prediction task. In addition, we theoretically studied the parameter efficiency problem and derived a simple criterion for optimal parameter trade-off. We discussed several interpretations and insights of MEI as a novel general design pattern for knowledge graph embedding, and we applied the framework of MEI to present a new generalized explanation for several specially designed interaction mechanisms in previous models.

In future work, we plan to conduct more experiments with MEI, especially regarding the ensemble boosting effect and the meta-dimensional transforming--matching framework. Other interesting directions include more in-depth studies of the embedding internal structure and the nature of multi-partition embedding interaction, especially with applications in other domains such as natural language processing, computer vision, and recommender systems.

\ack
This work was supported by the Cross-ministerial Strategic Innovation Promotion Program (SIP) Second Phase, ``Big-data and AI-enabled Cyberspace Technologies'' by the New Energy and Industrial Technology Development Organization (NEDO).


\begin{thebibliography}{10}

\bibitem{balazevic_tuckertensorfactorization_2019}
Ivana Bala{\v z}evi{\'c}, Carl Allen, and Timothy~M. Hospedales, `{{TuckER}}:
  {{Tensor Factorization}} for {{Knowledge Graph Completion}}', in {\em
  Proceedings of the 2019 {{Conference}} on {{Empirical Methods}} in {{Natural
  Language Processing}}}, pp. 5185--5194, (2019).

\bibitem{bergstra_randomsearchhyperparameter_2012}
James Bergstra and Yoshua Bengio, `Random {{Search}} for {{Hyper-Parameter
  Optimization}}', {\em Journal of Machine Learning Research}, {\bf 13},
  281--305, (2012).

\bibitem{bollacker_freebasecollaborativelycreated_2008}
Kurt Bollacker, Colin Evans, Praveen Paritosh, Tim Sturge, and Jamie Taylor,
  `Freebase: A {{Collaboratively Created Graph Database}} for {{Structuring
  Human Knowledge}}', in {\em Proceedings of the 2008 {{ACM SIGMOD
  International Conference}} on {{Management}} of {{Data}}}, pp. 1247--1250,
  (2008).

\bibitem{bordes_translatingembeddingsmodeling_2013}
Antoine Bordes, Nicolas Usunier, Alberto {Garcia-Duran}, Jason Weston, and
  Oksana Yakhnenko, `Translating {{Embeddings}} for {{Modeling Multi-Relational
  Data}}', in {\em Advances in {{Neural Information Processing Systems}}}, pp.
  2787--2795, (2013).

\bibitem{cai_grouprepresentationtheory_2019}
Chen Cai, `Group {{Representation Theory}} for {{Knowledge Graph Embedding}}',
  {\em arXiv:1909.05100 [cs, math]}, (2019).

\bibitem{delathauwer_decompositionshigherordertensor_2008a}
Lieven De~Lathauwer, `Decompositions of a {{Higher-Order Tensor}} in {{Block
  Terms}}\textemdash{{Part II}}: {{Definitions}} and {{Uniqueness}}', {\em SIAM
  Journal on Matrix Analysis and Applications}, {\bf 30}(3),  1033--1066,
  (2008).

\bibitem{dettmers_convolutional2dknowledge_2018}
Tim Dettmers, Pasquale Minervini, Pontus Stenetorp, and Sebastian Riedel,
  `Convolutional {{2D Knowledge Graph Embeddings}}', in {\em Proceedings of the
  32nd {{AAAI Conference}} on {{Artificial Intelligence}}}, pp. 1811--1818,
  (2018).

\bibitem{ebisu_toruseknowledgegraph_2018}
Takuma Ebisu and Ryutaro Ichise, `{{TorusE}}: {{Knowledge Graph Embedding}} on
  a {{Lie Group}}', in {\em Proceedings of the 32nd {{AAAI Conference}} on
  {{Artificial Intelligence}}}, pp. 1819--1826, (2018).

\bibitem{ebisu_generalizedtranslationbasedembedding_2019}
Takuma Ebisu and Ryutaro Ichise, `Generalized {{Translation-based Embedding}}
  of {{Knowledge Graph}}', {\em IEEE Transactions on Knowledge and Data
  Engineering}, {\bf 32}(5),  941--951, (2019).

\bibitem{hitchcock_expressiontensorpolyadic_1927}
Frank~L. Hitchcock, `The {{Expression}} of a {{Tensor}} or a {{Polyadic}} as a
  {{Sum}} of {{Products}}', {\em Journal of Mathematics and Physics}, {\bf
  6}(1-4),  164--189, (1927).

\bibitem{ioffe_batchnormalizationaccelerating_2015}
Sergey Ioffe and Christian Szegedy, `Batch {{Normalization}}: {{Accelerating
  Deep Network Training}} by {{Reducing Internal Covariate Shift}}', in {\em
  International {{Conference}} on {{Machine Learning}}}, pp. 448--456, (2015).

\bibitem{kazemi_simpleembeddinglink_2018}
Seyed~Mehran Kazemi and David Poole, `{{SimplE Embedding}} for {{Link
  Prediction}} in {{Knowledge Graphs}}', in {\em Advances in {{Neural
  Information Processing Systems}}}, pp. 4289--4300, (2018).

\bibitem{kingma_adammethodstochastic_2015}
Diederik~P. Kingma and Jimmy Ba, `Adam: {{A Method}} for {{Stochastic
  Optimization}}', in {\em International {{Conference}} on {{Learning
  Representations}}}, p.~15, (2015).

\bibitem{kolda_tensordecompositionsapplications_2009}
Tamara~G. Kolda and Brett~W. Bader, `Tensor {{Decompositions}} and
  {{Applications}}', {\em SIAM Review}, {\bf 51}(3),  455--500, (2009).

\bibitem{lacroix_canonicaltensordecomposition_2018}
Timoth{\'e}e Lacroix, Nicolas Usunier, and Guillaume Obozinski, `Canonical
  {{Tensor Decomposition}} for {{Knowledge Base Completion}}', in {\em
  International {{Conference}} on {{Machine Learning}}}, pp. 2863--2872,
  (2018).

\bibitem{lerer_pytorchbiggraphlargescalegraph_2019}
Adam Lerer, Ledell Wu, Jiajun Shen, Timothee Lacroix, Luca Wehrstedt, Abhijit
  Bose, and Alex Peysakhovich, `{{PyTorch-BigGraph}}: {{A Large-scale Graph
  Embedding System}}', in {\em Proceedings of the 2nd {{SysML Conference}}},
  p.~12, (2019).

\bibitem{mikolov_efficientestimationword_2013}
Tomas Mikolov, Kai Chen, Greg Corrado, and Jeffrey Dean, `Efficient
  {{Estimation}} of {{Word Representations}} in {{Vector Space}}', in {\em
  Workshop {{Proceedings}} of the 2013 {{International Conference}} on
  {{Learning Representations}}}, p.~12, (2013).

\bibitem{millergeorgea._wordnetlexicaldatabase_1995}
{Miller, George A.}, `{{WordNet}}: A {{Lexical Database}} for {{English}}',
  {\em Communications of the ACM}, {\bf 38}(11),  39--41, (1995).

\bibitem{nickel_holographicembeddingsknowledge_2016}
Maximilian Nickel, Lorenzo Rosasco, and Tomaso Poggio, `Holographic
  {{Embeddings}} of {{Knowledge Graphs}}', in {\em Proceedings of the 30th
  {{AAAI Conference}} on {{Artificial Intelligence}}}, pp. 1955--1961, (2016).

\bibitem{nickel_threewaymodelcollective_2011}
Maximilian Nickel, Volker Tresp, and Hans-Peter Kriegel, `A {{Three-Way Model}}
  for {{Collective Learning}} on {{Multi-Relational Data}}', in {\em
  International {{Conference}} on {{Machine Learning}}}, pp. 809--816, (2011).

\bibitem{ruffinelli_youcanteach_2020}
Daniel Ruffinelli, Samuel Broscheit, and Rainer Gemulla, `You {{CAN Teach}} an
  {{Old Dog New Tricks}}! {{On Training Knowledge Graph Embeddings}}', in {\em
  International {{Conference}} on {{Learning Representations}}}, p.~20, (2020).

\bibitem{srivastava_dropoutsimpleway_2014}
Nitish Srivastava, Geoffrey Hinton, Alex Krizhevsky, Ilya Sutskever, and Ruslan
  Salakhutdinov, `Dropout: {{A Simple Way}} to {{Prevent Neural Networks}} from
  {{Overfitting}}', {\em The Journal of Machine Learning Research}, {\bf
  15}(1),  1929--1958, (2014).

\bibitem{sun_rotateknowledgegraph_2019}
Zhiqing Sun, Zhi-Hong Deng, Jian-Yun Nie, and Jian Tang, `{{RotatE}}:
  {{Knowledge Graph Embedding}} by {{Relational Rotation}} in {{Complex
  Space}}', in {\em International {{Conference}} on {{Learning
  Representations}}}, p.~18, (2019).

\bibitem{toutanova_observedlatentfeatures_2015}
Kristina Toutanova and Danqi Chen, `Observed versus latent features for
  knowledge base and text inference', in {\em Proceedings of the 3rd
  {{Workshop}} on {{Continuous Vector Space Models}} and Their
  {{Compositionality}}}, pp. 57--66, (2015).

\bibitem{tran_analyzingknowledgegraph_2019}
Hung-Nghiep Tran and Atsuhiro Takasu, `Analyzing {{Knowledge Graph Embedding
  Methods}} from a {{Multi-Embedding Interaction Perspective}}', in {\em
  Proceedings of the {{Data Science}} for {{Industry}} 4.0 {{Workshop}} at
  {{EDBT}}/{{ICDT}}}, p.~7, (2019).

\bibitem{tran_exploringscholarlydata_2019}
Hung-Nghiep Tran and Atsuhiro Takasu, `Exploring {{Scholarly Data}} by
  {{Semantic Query}} on {{Knowledge Graph Embedding Space}}', in {\em
  Proceedings of the 23rd {{International Conference}} on {{Theory}} and
  {{Practice}} of {{Digital Libraries}}}, pp. 154--162, (2019).

\bibitem{tran_multipartitionembeddinginteraction_2020}
Hung-Nghiep Tran and Atsuhiro Takasu, `Multi-{{Partition Embedding
  Interaction}} with {{Block Term Format}} for {{Knowledge Graph Completion}}',
  in {\em Proceedings of the {{European Conference}} on {{Artificial
  Intelligence}}}, pp. 833--840, (2020).

\bibitem{tran_meimmultipartitionembedding_2022}
Hung-Nghiep Tran and Atsuhiro Takasu, `{{MEIM}}: {{Multi-partition Embedding
  Interaction Beyond Block Term Format}} for {{Efficient}} and {{Expressive
  Link Prediction}}', in {\em Proceedings of the {{Thirty-First International
  Joint Conference}} on {{Artificial Intelligence}}}, pp. 2262--2269, (2022).

\bibitem{trouillon_complexembeddingssimple_2016}
Theo Trouillon, Johannes Welbl, Sebastian Riedel, {Eric Gaussier}, and
  {Guillaume Bouchard}, `Complex {{Embeddings}} for {{Simple Link
  Prediction}}', in {\em International {{Conference}} on {{Machine Learning}}},
  pp. 2071--2080, (2016).

\bibitem{tucker_mathematicalnotesthreemode_1966}
Ledyard~R Tucker, `Some {{Mathematical Notes}} on {{Three-Mode Factor
  Analysis}}', {\em Psychometrika}, {\bf 31}(3),  279--311, (1966).

\bibitem{vrandecic_wikidatafreecollaborative_2014}
Denny Vrande{\v c}i{\'c} and Markus Kr{\"o}tzsch, `Wikidata: A {{Free
  Collaborative Knowledgebase}}', {\em Communications of the ACM}, {\bf
  57}(10),  78--85, (2014).

\bibitem{wang_knowledgegraphembedding_2017}
Q.~Wang, Z.~Mao, B.~Wang, and L.~Guo, `Knowledge {{Graph Embedding}}: {{A
  Survey}} of {{Approaches}} and {{Applications}}', {\em IEEE Transactions on
  Knowledge and Data Engineering}, {\bf 29}(12),  2724--2743, (2017).

\bibitem{yang_embeddingentitiesrelations_2015}
Bishan Yang, Wen-tau Yih, Xiaodong He, Jianfeng Gao, and Li~Deng, `Embedding
  {{Entities}} and {{Relations}} for {{Learning}} and {{Inference}} in
  {{Knowledge Bases}}', in {\em International {{Conference}} on {{Learning
  Representations}}}, p.~12, (2015).

\bibitem{zhang_quaternionknowledgegraph_2019}
Shuai Zhang, Yi~Tay, Lina Yao, and Qi~Liu, `Quaternion {{Knowledge Graph
  Embedding}}', in {\em Advances in {{Neural Information Processing Systems}}},
  pp. 2735--2745, (2019).

\end{thebibliography}

\clearpage
\appendix

\section{Extra Experiments}
In this section, we present extra results obtained with well-tuned hyperparameters and recent training techniques in our new source code \url{https://github.com/tranhungnghiep/MEI-KGE}.

\subsection{Results of Well-tuned Small Models}
When data is large, the embedding size needs to increase to fit the data. However, real-world knowledge graphs are very large with billions of entities, so even the largest practical embedding sizes are relatively small compared to the data sizes. To simulate and study such scenarios, we examine the performance of small models with embedding size $ D = K \times C = 100 $ on four benchmark datasets WN18, FB15K, WN18RR, FB15K-237.

We compare three small models, the previous state-of-the-art small model ComplEx$ _{50 \times 2} $ \cite{lacroix_canonicaltensordecomposition_2018}, the small model MEI$ _{10 \times 10} $ \cite{tran_multipartitionembeddinginteraction_2020}, and the improved model MEIM$ _{10 \times 10} $ \cite{tran_meimmultipartitionembedding_2022}. These models were well-tuned with recent training techniques including softmax cross-entropy loss \cite{dettmers_convolutional2dknowledge_2018} \cite{lacroix_canonicaltensordecomposition_2018} \cite{ruffinelli_youcanteach_2020}. We also report RotatE results as a reference of previous state-of-the-art large model \cite{sun_rotateknowledgegraph_2019}.

Table \ref{tab:result_tuned_small} shows that MEI$ _{10 \times 10} $ strongly outperforms the previous state-of-the-art small model ComplEx$ _{50 \times 2} $ on all datasets. Moreover, the improved MEIM$ _{10 \times 10} $ even outperforms the much larger RotatE$ _{500 \times 2} $ and RotatE$ _{1000 \times 2} $ models. These results supports our theoretical analysis on the advantage of multi-partition embedding interaction, and agrees with recent group-theoretic analyses \cite{cai_grouprepresentationtheory_2019} on the limitations of RotatE due to partition size $ C = 2 $, which our models systematically address. In summary, the results demonstrate our models' strong point of being both efficient and expressive.

\begin{table}
	\begin{center}
		\caption{Results of small MEI$ _{10 \times 10} $ \cite{tran_multipartitionembeddinginteraction_2020} and MEIM$ _{10 \times 10} $ \cite{tran_meimmultipartitionembedding_2022} models well-tuned with recent training techniques. ComplEx represents a previous state-of-the-art small model, tuned by \cite{lacroix_canonicaltensordecomposition_2018} and reported on their github page. RotatE represents a previous state-of-the-art large model \cite{sun_rotateknowledgegraph_2019}, as a reference. Best results of small models are in bold and second best results are underlined. Best results of large models are in bold and italicized.}
		\label{tab:result_tuned_small}
		
		\begin{adjustbox}{max width=\columnwidth}
			\begin{tabular}{@{\extracolsep{-3.5pt}}llrcccc}
				\toprule
				& & Param. & & \multicolumn{3}{c}{H@} \\
				& & count & MRR & 1 & 3 & 10 \\
				\midrule
				
				& RotatE$ _{500 \times 2} $ & 40.961M & 0.949 & 0.944 & 0.952 & 0.959 \\ 
				\cmidrule{3-7}
				\multirow{1}{*}{WN18} & ComplEx$ _{50 \times 2} $ & 4.098M & \underline{0.950} & 0.940 & 0.950 & 0.950 \\
				
				& MEI$ _{10 \times 10} $ & 4.099M & \underline{0.950} & \underline{0.945} & \textbf{0.953} & \underline{0.957} \\ 
				& MEIM$ _{10 \times 10} $ & 4.108M & \textbf{0.951} & \textbf{0.946} & \textbf{0.953} & \textbf{0.960} \\ 
				\midrule
				\midrule
				
				& RotatE$ _{1000 \times 2} $ & 32.592M & 0.797 & 0.746 & \textit{\textbf{0.830}} & \textit{\textbf{0.884}} \\ 
				\cmidrule{3-7}
				\multirow{1}{*}{FB15K} & ComplEx$ _{50 \times 2} $ & 1.630M & 0.780 & 0.730 & 0.810 & 0.860 \\
				
				& MEI$ _{10 \times 10} $ & 1.631M & \underline{0.790} & \underline{0.746} & \underline{0.817} & \underline{0.870} \\ 
				& MEIM$ _{10 \times 10} $ & 1.640M & \textbf{0.800} & \textbf{0.757} & \textbf{0.823} & \textbf{0.878} \\ 
				\midrule
				\midrule

				& RotatE$ _{500 \times 2} $ & 40.954M & 0.476 & 0.428 & 0.492 & \textit{\textbf{0.571}} \\ 
				\cmidrule{3-7}
				\multirow{1}{*}{WN18RR} & ComplEx$ _{50 \times 2} $ & 4.097M & 0.460 & 0.430 & 0.470 & 0.520 \\
				
				& MEI$ _{10 \times 10} $ & 4.098M & \underline{0.468} & \underline{0.434} & \underline{0.482} & \underline{0.531} \\ 
				& MEIM$ _{10 \times 10} $ & 4.107M & \textbf{0.481} & \textbf{0.446} & \textbf{0.494} & \textbf{0.550} \\ 
				\midrule
				\midrule

				& RotatE$ _{1000 \times 2} $ & 29.556M & 0.338 & 0.241 & 0.375 & \textit{\textbf{0.533}} \\ 
				\cmidrule{3-7}
				\multirow{1}{*}{FB15K-237} & ComplEx$ _{50 \times 2} $ & 1.502M & 0.340 & 0.250 & 0.370 & 0.520 \\
				
				& MEI$ _{10 \times 10} $ & 1.503M & \underline{0.347} & \underline{0.256} & \underline{0.380} & \underline{0.531} \\ 
				& MEIM$ _{10 \times 10} $ & 1.512M & \textbf{0.350} & \textbf{0.258} & \textbf{0.385} & \textbf{0.533} \\ 
				
				\bottomrule
			\end{tabular}
		\end{adjustbox}
	\end{center}
\end{table}

\vfill\break  

\subsection{Results of Well-tuned Base Models}
The previous results of MEI was obtained using old training techniques such as binary cross-entropy loss and under-tuned hyperparameters. To see true performance of MEI in these extra experiments, we use recent training techniques including softmax cross-entropy loss, larger batch sizes, and well-tuned hyperparameters as presented in our published source code. Detailed analysis on the settings and hyperparameters' effects will be published in the future. 

Table \ref{tab:result_tuned} shows the results on three standard benchmark datasets WN18RR, FB15K-237, and YAGO3-10. We see can MEI achieves very good results with well-tuned settings and hyperparameters. This demonstrates the high quality and performance of MEI.

\begin{table}
	\begin{center}
		\caption{Results of MEI models well-tuned with recent training techniques.}
		\label{tab:result_tuned}
		
		\begin{adjustbox}{max width=\columnwidth}
			\begin{tabular}{@{\extracolsep{-2pt}}llrcccc}
				\toprule
				& & Param. & & \multicolumn{3}{c}{H@} \\
				& & count & MRR & 1 & 3 & 10 \\
				\midrule
				
				WN18RR & MEI$ _{3 \times 100} $ & 13.3M & 0.483 & 0.447 & 0.497 & 0.553 \\ 
				
				FB15K-237 & MEI$ _{3 \times 100} $ & 5.4M & 0.364 & 0.270 & 0.398 & 0.550 \\ 
				
				YAGO3-10 & MEI$ _{5 \times 100} $ & 62.6M & 0.578 & 0.505 & 0.622 & 0.710 \\ 
				
				\bottomrule
			\end{tabular}
		\end{adjustbox}
	\end{center}
\end{table}

\end{document}